% tlpguide.tex
% v1.0, released 24 Mar 2021
% Copyright 2021 Cambridge University Press

\documentclass{tlp}

\usepackage[space=true]{accsupp}
\usepackage{amsmath}
\usepackage{amssymb}
\usepackage{ dsfont }
\usepackage{epigraph}
\usepackage{graphicx}
\usepackage{multirow}
\usepackage{paralist} % not needed. enough space. But nicer in definitions...
\usepackage{ stmaryrd }
\usepackage{url}
\usepackage{wrapfig}
\usepackage{xspace}
\usepackage{natbib}

\newcommand\m[1]{\ensuremath{#1}\xspace}
\newcommand{\T}    {\m{T}}
\newcommand{\Tenv} {\m{T_{\mathit{env }}}}

\newcommand{\Tconf}{\m{T_{\mathit{sol}}}}%conf}}}}

\newcommand{\Tall} {\m{T_{\mathit{all }}}}
%\Tenv\land\Tconf

\newcommand{\Venv}{\m{\Sigma_{\mathit{env} }}}
\newcommand{\Vconf}{\m{\Sigma_{\mathit{sol}}}}%conf} }}}
\newcommand{\Vall}{\m{ \Sigma_{\mathit{all}}}}

\newcommand{\Tr}{\m{\mathbf{t}}}
\newcommand{\Fa}{\m{\mathbf{f}}}
\newcommand{\Un}{\m{\mathbf{u}}}

\newcommand\fodot{\m{\mathrm{FO}(\cdot)}}
\newcommand{\voc}{\Sigma}

\newcommand{\I}{\m{S}}
\newcommand{\J}{\m{S'}}
\newcommand{\M}{\m{M}}
\newcommand{\dom}[1]{\m{dom(#1)}}
\newcommand{\intsym}[1]{\Sigma_{#1}}

\newcommand{\compl}[1]{\overline{#1}}

\newcommand{\eval}[2]{\llbracket #1 \rrbracket^{#2}}

%BART: DO NOT PUT extra brackes around \m. Messes up spacing!
% NOT: \newcommand{\Senv}{{\m{S_{\mathit{env} }}}}
% BUT: \newcommand{\Senv}{\m{S_{\mathit{env} }}}
% OR, if you really want the extra brackets because of double subscripts that are not handled where they are written:: \newcommand{\Senv}{\m{{S_{\mathit{env} }}}}

\newcommand{\Senv}{\m{S_\mathit{env}}}
\newcommand{\Sobs}{\m{S_\mathit{obs}}}

\newcommand{\Sobsi}{\m{S_{\mathit{obs} }^{\mathit{i}}}}
\newcommand{\Sobsj}{\m{S_{\mathit{obs} }^{\mathit{i+1}}}}
\newcommand{\Se}{\m{S_o}}
\newcommand{\Sdec}{\m{S_{\mathit{dec}} }}

\newcommand{\Sdeci}{\m{S_{\mathit{dec}}^{\mathit{i}  }}}%conf} }}}
\newcommand{\Sdecj}{\m{S_{\mathit{dec}}^{\mathit{i+1}}}}%conf} }}}
\newcommand{\Sc}{\m{S_d}}
\newcommand{\Sconf} {\m{S_{\mathit{sol}}}}%conf} }}}
\newcommand{\Sall}{\m{S_{\mathit{all} }}}
\newcommand{\MinDef}{\m{MinDef}}
\newcommand{\MinCont}{\m{MinCont}}

\newcommand{\RegistrationType}{\m{\mathit{RegistrationType} }}
\newcommand{\SocialHabitat}{\m{\mathit{SocialHousing} }}
\newcommand{\LowRent}{\m{\mathit{LowRent} }}
\newcommand{\TaxRate}{\m{\mathit{TaxRate} }}
\newcommand{\LicensedSeller}{\m{\mathit{LicensedSeller} }}
\newcommand{\Social}{\m{\mathit{Social} }}
\newcommand{\Modest}{\m{\mathit{Modest} }}
\newcommand{\Other}{\m{\mathit{Other} }}

\usepackage[dvipsnames,svgnames]{xcolor}
\newcommand{\pierre}[1]{{\color{red}\textsc{PC:} #1}}

\newcommand{\bart}[1]{{\color{OliveGreen}\textsc{BB:} #1}}

\newcommand{\ignore}[1]{}

\renewcommand\implies{\Rightarrow}

\newcommand{\leqp}{\leq_p}

\newcommand{\ltp}{<_p}

\newcommand{\join}[2]{\m{#1+#2}}

\newcommand{\structure}{structure\xspace}

\newcommand{\System}{{\tt System}\xspace}
\newcommand{\User}{{\tt User}\xspace}
\newcommand{\cautious}{{definite}\xspace}
\newcommand{\brave}{{contingent}\xspace}

\newtheorem{definition}{Definition}
\newtheorem{example}{Example}
\newtheorem{proposition}{Proposition}

\begin{document} %%%%%%%%%%%%%%%%%%%%%%%%%%%%%%%%%%%%%%%%%%%

\lefttitle{Interactive Model Expansion in an Observable Environment - Carbonnelle et al.}

\jnlPage{1}{8}
\jnlDoiYr{2021}
\doival{10.1017/xxxxx}

\title[Interactive Model Expansion in an Observable Environment]{Interactive Model Expansion\\ in an Observable Environment\thanks{This research received funding from the Flemish Government under the ``Onderzoeksprogramma Artifici\"ele Intelligentie (AI) Vlaanderen'' programme.}}

\begin{authgrp}
    \author{\sn{Carbonnelle} \gn{Pierre}}\affiliation{KU Leuven}
    \author{\sn{Vennekens} \gn{Joost} } \affiliation{KU Leuven}
    \author{\sn{Bogaerts} \gn{Bart}}    \affiliation{Vrije Universiteit Brussel}
    \author{\sn{Denecker} \gn{Marc} }   \affiliation{KU Leuven}
\end{authgrp}

%\history{\sub{xx xx xxxx;} \rev{xx xx xxxx;} \acc{xx xx xxxx}}

\maketitle

\begin{abstract}
    Many practical problems can be understood as the search for a state of affairs that extends a fixed partial state of affairs, the \emph{environment}, while satisfying certain conditions that are formally specified.
    Such problems are found in, e.g., engineering, law or economics.

    We study this class of problems in a context where some of the relevant information about the environment is not known by the user at the start of the search.
    During the search, the user may consider tentative solutions that make implicit hypotheses about these unknowns.
    To ensure that the solution is appropriate, these hypotheses must be verified by observing the environment.
    Furthermore, we assume that, in addition to knowledge of what constitutes a solution, knowledge of general laws of the environment is also present.
    We formally define partial solutions with enough verified facts to guarantee the existence of complete and appropriate solutions.

    Additionally, we propose an interactive system to assist the user in their search by determining 1)  which hypotheses implicit in a tentative solution must be verified in the environment, and 2) which observations can bring useful information for the search.
    We present an efficient method to over-approximate the set of relevant information, and evaluate our implementation.

\end{abstract}

\begin{keywords}
Knowledge Representation, Man-Machine Interface, Configuration Problem.
\end{keywords}

\section{Introduction}
    \setlength{\epigraphwidth}{2in}
    \renewcommand*{\textflush}{flushright}
    \setlength{\epigraphrule}{0pt}
    \epigraph{Some things are in our control \\and others not.}{\textit{Epictetus}}

Many practical problems can be understood as the search for a state of affairs that extends a fixed partial state of affairs, the \emph{environment}, while satisfying certain conditions that are formally specified.
Such problems are found in, e.g., engineering, law or economics.
Consider, for instance, the buyer of a house who has to declare the purchase for tax purposes.
The environment consists of the properties of the house, of the seller and of the sales contract: they are not under the control of the purchaser.
This environment has to be extended with a registration containing decisions on, e.g., which tax rebate to claim.
Such rebates are subject to conditions, and have consequences, stated by law.
% For instance, ``configuration problems'' are problems of designing complex technical systems out of simpler components to meet customer requirements.
% From the perspective of the designer, the environment consists of the customer requirements and the technical characteristics of the available components; the task of the designer is to extend this environment with design decisions such that the characteristics of the designed system meet the customer requirements.

The Knowledge Representation (KR) methodology to solve such problems is to design a vocabulary to represent the relevant objects and concepts in the states of affairs, and to express the conditions as a theory in a declarative KR language.
The theory is the set of laws to be satisfied.
When the environment is fully known, it can be expressed as a structure, and the search for a solution can be performed automatically by Model Expansion inference on the theory and the structure~\citep{mitchell2006model}.

However, the information about the environment is rarely fully known at the start of the search.
In our example, the buyer of the house does not know all the attributes of the house when they start their registration.
In such cases, two approaches have been implicitly used:
(i) all \emph{relevant} information about the environment is assumed to be known at the start of the search so that a solution can be determined without making any hypothesis about the environment~\citep{Felfernig2014b}, or,
(ii) when this assumption is dropped, the model expansion mechanism will choose values for the unknown environmental symbols, and the user has the implicit responsibility to verify that these values are indeed correct in the environment.
We are not aware of any method to assist the user in the verification task that this second approach requires.
Thus, there is a danger that the verification is incomplete, and that the user accepts a solution making incorrect assumptions in their environment.

The primary goal of this work is to study the problem of model expansion in an unknown, but observable, environment in a precise and formal way.
To achieve this goal, it is crucial that we make an \emph{explicit distinction} between environmental and decision symbols in the system.
In this setting, an important difference arises between an \emph{unknown} environmental symbol, and an \emph{undecided} decision symbol.
In both cases, the value of the symbol is not available to the system, but in the first case, it exists but is unknown, while in the second case, it does not exist yet.
Values assigned to symbols by the system must be handled accordingly.
When the system assigns a value to an environmental symbol, the user must go observe if the assigned value is correct.

Notice that the value of some environmental symbols may not be known at the start of the search.
For example, the buyer may not know whether a rebate has already been claimed for a renovation.
If this information becomes relevant, the buyer will have to ``observe it'' in the environment, e.g., by asking the seller.
Distinguishing environmental from decision symbols is unique to our system.
When the system is aware of this distinction, it can assist the user by highlighting which observations still need to be made to be sure that the proposed solution conforms to reality, as well as inform them that the final configuration is indeed appropriate (i.e., that all the hypotheses have been verified by observing the environment).

Next to making a distinction between two types of symbols, we also make a distinction between the laws of possible environments and the laws of acceptable solutions.
The laws of possible environments are guaranteed to hold independently of the decisions of the user; by contrast, the satisfaction of the laws of acceptable solutions depends on the decision of the user.
Importantly, unlike consequences of the laws of possible environments, environmental consequences derived from the laws of acceptable solutions must be observed.
To see this, assume we extend our previous examples with assertions stating which county belongs to which states, and what decisions a buyer must make to obtain a tax rebate in each state.
The first assertions are part of the laws of possible environments, while the second assertions are part of the laws of acceptable solutions.
If the property is known to be in a county, it is safe to conclude from the first set of assertions that the property is also in the state of that county: it is not necessary to observe the state in which it is.
By contrast, suppose that the buyer decides to claim a tax rebate available in only one state.
By the second set of assertions, the system concludes that the property must be in that state, but this is not a safe conclusion.
Indeed, it is clear that this conclusion must be verified by observing the environment, and if the property is observed to be in another state, the user must be informed that he cannot claim that tax rebate.

To further assist the user in their task, the system can also determine which observations can bring useful information for the search.
Suppose that a lower tax rate can be obtained only if the property is energy efficient and will be used for housing.
If the buyer does not plan to use it for housing, then the energy efficiency does not need to be known.
To capture this kind of scenario, we introduce a notion of \emph{relevance}: informally, a symbol is relevant when it is an essential part of a solution.
Relevance  is instrumental in reducing the number of observations and decisions the user has to make.
This is especially valuable when such observations are costly.
Past efforts to determine relevant symbols in the context of interactive configuration have used so-called \emph{activation rules}, i.e., rules that determine when symbols become relevant.
These activation rules have to be added manually by the knowledge engineer.
Our approach eliminates the need for such rules, and automatically derives the relevance of symbols based on their occurrence in minimal solutions instead.
Furthermore, we propose an approximate, but efficient, method to derive relevance automatically, using standard solvers.

The rest of this paper is organized as follows.
In Section~\ref{sec:prelims}, we introduce the logic concepts we use in this paper.
Next, we formalize model expansion in a two-theory setting: one describing valid solutions and another describing possible environments, with a clear distinction between environmental and decision symbols (Section \ref{total solutions}).
In Section~\ref{process} and \ref{Correctness}, we formalize the concept and operation of an assistant for interactive model expansion in that setting.
Then, we define partial solutions with enough observations and decisions to guarantee that complete and appropriate solutions exist (Section~\ref{Stop}).
We define relevant symbols (Sections~\ref{Relevance}), and present an efficient method to over-approximate the set of relevant symbols.
Finally, we evaluate our implementation (Section~\ref{Implementation}) and compare it to related work (Section~\ref{related})

\section{Preliminaries}
\label{sec:prelims}

We assume familiarity with syntax and semantics of propositional logic and first-order logic (FO).
To keep the presentation light, we develop our theory for propositional logic, but, in fact, the concepts we define apply to all logics with a model semantics.
Our examples use a fragment of first-order logic.

\ignore{
    \bart{Should we mention "in the context of a fixed and finite domain"?}
    \pierre{Z3 supports infinite domains.  We would say somehting about fixed though}
    \bart{If you want to guarantee termination of the system, finiteness might become important. Or you need to be very careful, for instance not allowing relations/functions over the infinite domain (infinitely many choices to be made?)...}
    \pierre{I understand, but this does not belong in this section.  Also, we do not make claim about termination.}
}

A vocabulary $\voc$ is a set of symbols.
A theory is a set of formulae constructed from these symbols using the logic connectives $\lnot, \land, \lor, \implies, \Leftrightarrow$ (negation, conjunction, disjunction, material implication, equivalence).
A partial (resp. total) structure of $\voc$ is a partial (resp. total) function from $\voc$ to $\{\Tr,\Fa\}$, giving a Boolean value (a.k.a. interpretation) to each symbol.
\ignore{
    More generally, a structure is a mapping of symbols to their interpretation in a domain of discourse.\footnote{In first-order logic, a partial structure is a function from symbols $\sigma\in\voc$ to partial (total) values $\sigma^\I$, where a partial (total) $n$-ary relation is a partial (total) function from $\dom{\I}^n$ to $\{\Tr,\Fa\}$ (or a total function to $\{\Tr,\Fa,\Un\}$, respectively to $\{\Tr,\Fa\}$), and a partial (total) $n$-ary function is a partial (total) function from $\dom{\I}^n$ to $\dom{\I}$. }
}
The value of a symbol $\sigma\in\voc$ in a structure $S$ is denoted by $\sigma^\I$ if it exists.
In this case, we say that $\sigma$ is interpreted by $\I$, and that the \emph{fact} ``$\sigma$ has the value denoted by $\sigma^\I$'' is true in $\I$. The set of interpreted symbols of $\I$ is denoted by $\intsym{\I}$.
If $\I$ is a total structure of $\voc$, we can determine the truth-value of all well-formed expressions and theories $e$ over $\voc$; the resulting truth-value is denoted by $\eval{e}{\I}$.
A total structure \I that makes every formula in \T true is a \emph{model} of \T; then, \T is \emph{satisfied} by \I.

We call a partial structure $\I$ \emph{less precise than}  $\J$ (and write $\I \leqp \J$) if $\Sigma_\I \subseteq \Sigma_{\J}$ and all facts true in $\I$ are also true in $\J$, i.e., if $\J$ assigns the same values as $\I$ to the interpreted symbols of $\I$ (it may interpret more symbols).
In this case, we call \J an \emph{expansion of \I}.
The set of total expansions of $\I$ is denoted $\compl{\I}$.
A partial structure \I is \emph{consistent with \T} if it has a \emph{model expansion} for \T, i.e., a total expansion that satisfies \T. %: $\exists \M \in \compl{\I}:\eval{\T}{\M}=\Tr$.
It \emph{entails \T} if all its total expansions satisfy \T.
We say that $\I$ and $\J$ are \emph{disjoint} if  $\intsym{\I}\cap\intsym{\J}=\emptyset$.
For two disjoint structures $\I$ and $\J$, we define $\join{\I}{\J}$ to be the structure that maps symbols $\sigma$ interpreted by $\I$  to $\sigma^\I$ and symbols interpreted by \J  to $\sigma^{\J}$.

\ignore{

    \textit{Model expansion} is the inference that takes as input a partial structure $S$ and a theory $T$, and has output  a total structure $\J$ expanding  $S$, such that $\eval{T}{S}$ is $\Tr$.
    If there is such a structure, we write $\exists \J \in \compl{S}: \eval{T}{\J} =\Tr$, where $\compl{S}$ is the set of expansions of $S$.
    If every expansion of $S$ satisfies $T$, we write $\forall \J \in \compl{S}:  \eval{T}{\J} =\Tr.$
}

\section{Split model expansion}
\label{total solutions}

We first consider the generic problem of (non-interactive) model expansion in a two-theory setting, using two vocabularies, when the environment is fully known.
The first vocabulary, \Venv, contains the \emph{\underline{env}ironmental symbols}.
The particular environment faced by the user is described by a total \structure \Senv for \Venv. The second vocabulary, \Vconf, is disjoint from the first and contains the decision symbols.
A \underline{sol}ution is fully described by a total \structure \Sconf for \Vconf.
The union of the 2 vocabularies is denoted by \Vall.

We assume to have a theory \Tenv over vocabulary \Venv that contains the laws satisfied by any environment that could be faced by the user.
Any structure \Senv that describes a particular environment is a model of \Tenv.
We also assume that the constraints on the solution to the problem faced by the user are formalized by a logic theory \Tconf in the vocabulary \Vall.
This theory fully captures all knowledge about when a solution is valid in a given environment.

\begin{definition}
\label{def:enhanced}
    The \emph{split model expansion problem} $\mathcal{SMX}$ is a problem that takes as input a tuple $(\Venv, \Vconf, \Senv, \Tenv, \Tconf)$ where \Senv is a total structure over \Venv, \Tenv is a theory over \Venv satisfied by \Senv, and \Tconf is a theory over $\Venv \cup \Vconf$.\\
    A \emph{total solution} for such a problem is a total structure \Sconf over \Vconf such that
    \[\eval{\Tconf}{\join{\Senv}{\Sconf}}=\Tr.\]
\end{definition}

% This definition requires that \Senv be fully known to verify that \Sconf is a solution.
% We will relax this requirement in Section~\ref{Stop}.
Note that \Tenv is not used in the definition of the total solution because we assume that the environment satisfies \Tenv already.
Having $\Tenv$ as an additional input is useful in the process of finding a solution when the environment  \Senv is not fully known, but observable, as discussed in the next sections.

\ignore{
    By contrast, prior work (e.g.,~\cite{van2017kb}) does not explicitly split the vocabulary in two sub-vocabularies.
    It considers problems whose input is a tuple $(\Vall, \Sall, \Tall)$ where \Sall is a partial structure over \Vall.
    Such problems can be seen as a special case of our split  model expansion problem $\mathcal{SMX}$ where \Venv and \Tenv are empty, and \Vconf, \Senv and \Tconf are set to \Vall, \Sall and \Tall respectively.
}

\begin{example}
\label{ex:wet}
    We consider a simplified tax legislation for the sale of real estate, defined as follows.
    The tax amount is 10 $\%$ of the sales price for standard registration with the authorities, but a lower tax rate can be obtained when the house is bought to be leased to tenants for a low rent, or when the seller is licensed to sell social housing.

    The environmental symbols we use to describe the seller and the house are propositions; the decision symbols are nullary function symbols: the type of registration (whose value is either $\Social, \Modest$ or $\Other$) and the tax rate (an integer).
    The states of affairs are described by structures over these vocabularies:
    \begin{align*}
        \Venv &= \{\SocialHabitat, \LicensedSeller , \LowRent \}. \\
        \Vconf &= \{\RegistrationType, \TaxRate\}.
    \end{align*}
    The theory \Tenv  about the possible environments states that a social housing is necessarily leased at a low rent.
    \[\SocialHabitat \Rightarrow \LowRent.\]
    The legislation is described by \Tconf, stating the prerequisites for the types of registration, and the corresponding tax rate (where equality is interpreted as in first order logic):
    \begin{align*}
       &\RegistrationType {=} \Modest  \Rightarrow \LowRent.\\
       &\RegistrationType {=} \Social  \Rightarrow \LicensedSeller \land \SocialHabitat.\\
       &\TaxRate {=} 1~  \Leftrightarrow \RegistrationType {=} \Social. \\
       &\TaxRate {=} 7~  \Leftrightarrow \RegistrationType {=} \Modest.~ \\
       &\TaxRate {=} 10  \Leftrightarrow \RegistrationType {=} \Other.~~
    \end{align*}
    A possible sale is described by \Senv {=} $\{\SocialHabitat{=}\Tr, LicencedSeller{=}\Tr, \LowRent{=}\Tr\}$.
    The split model expansion problem has 3 total solutions:
    \begin{align*}
        &\{\TaxRate {=} 1, \RegistrationType {=} \Social\}, \\
        &\{\TaxRate {=} 7, \RegistrationType {=} \Modest.\}, \text{ and}\\
        &\{\TaxRate {=} 10, \RegistrationType {=} \Other.\}.
    \end{align*}
    An interactive assistant for this theory is available online.\footnote{\url{https://tinyurl.com/simpleRegistration}}
\end{example}

\section{Interactive model expansion}
\label{process}

Often, a problem has many acceptable solutions in a given environment.
Instead of choosing a particular solution by automated model expansion~\citep{mitchell2006model}, it is preferable that the automated system allow the user to explore the set of acceptable solutions.
For example, a property tax legislation may allow the buyer to register the sales in different ways, as in Example 1, and the system should allow them to see the various prerequisites and consequences of each valid options, e.g., on the tax rate.

\ignore{
    In that case, the model expansion problem becomes a distributed constraint satisfaction problem,~\citep{yokoo2012distributed}, i.e., a problem where two agents, the automated system and its user, want to conceive a course of action that satisfies both of them.
    Each agent has its own knowledge, constraints and goals.
    We formalize this approach as follows.
}

We call the two interacting agents \System and \User.
\System offers \User the freedom to choose values for environmental and decision symbols while verifying the validity of the solution that \User constructs with respect to the 2 theories, \Tenv and \Tconf.
Since a system is often designed for many users, and each user wants a solution for their own environment,
we assume it is the task of \User to communicate information about their environment to \System.
\User may have additional knowledge that is unknown to \System.
Indeed, it is often the case that some tacit knowledge is not formalized in the theories, but well known by the user.
Obviously, \System cannot use this tacit knowledge; instead, \User must use it in the appropriate way.

At each step of an interactive process, \User has constructed two partial structures, \Sobs and \Sdec, assigning values to environmental and decision symbols respectively.
Those values correspond to \underline{obs}ervations and to (tentative) \underline{dec}isions respectively.

\begin{definition}
    A \emph{state of the interactive process} is a pair of partial structures $(\Sobs, \Sdec)$ such that \Sobs (resp. \Sdec) is a partial structure over \Venv (resp. \Vconf).
\end{definition}

By contrast, in prior work where there is no distinction between environmental and decision symbols, the state is described by only one partial structure.

Although \System only has the information that \User has provided to it, it also knows \Tenv and \Tconf; hence, it can verify that the state is consistent with \Tenv and \Tconf:

\begin{definition}
    \label{consistent}
    A state $(\Sobs, \Sdec)$ is \emph{consistent} (with \Tenv and \Tconf) iff there is an expansion of the state that satisfies both theories:
    \[\exists S \in \compl{\join{\Sobs}{\Sdec}} : \eval{\Tenv \cup \Tconf}{S} = \Tr\]
\end{definition}

During the interactive process, decisions can be added and retracted. This leads to the following definition:

\begin{definition}
    An \emph{interactive process} for a problem $\mathcal{SMX}$ with input $(\Venv, \Vconf, \Senv, \Tenv, \Tconf)$ is a sequence $(\Sobsi,\Sdeci)_{0<i\leq n}$ of $n$ consistent states, such that, at each step, either the set of observations or the set of decisions is made either more precise or less precise, i.e., for any $0<i<n$:
    \begin{compactitem}
        \item $(\Sobsi \ltp \Sobsj \lor \Sobsj \ltp \Sobsi) \land \Sdeci = \Sdecj$
        \item or, $\Sobsi = \Sobsj \land (\Sdeci \ltp \Sdecj \lor \Sdecj \ltp \Sdeci)$.
    \end{compactitem}
\end{definition}

% \begin{wrapfigure}{r}{0.5\textwidth}
%   \begin{center}
%     \includegraphics[scale=0.45]{images/State diagram 1}
%   \end{center}
%     \caption{State diagram of the interactive process. (+) makes the state more precise; (-) makes it less precise}
%         \label{fig:state}
% \end{wrapfigure}

In the interactive process, \User and \System work together towards a solution as follows: %(Figure~\ref{fig:state}).
\begin{compactitem}
    \item The interaction starts with a pair of empty structures, $(\Sobs, \Sdec)$.
            If this state is inconsistent with \Tenv and \Tconf, \System tells \User that there is no solution to the split model expansion problem.

    \item
        \User can make \Sobs or \Sdec more precise as long as the new state is consistent.
        \System prevents \User from changing their decisions \Sdec to an inconsistent state because it would not bring them closer to a solution.
        If \User needs to make \Sobs more precise following an observation, and if this change makes the state inconsistent, \System asks \User to retract a decision first so that the new observation can then be added without making the state inconsistent.
        To facilitate this task, \System shows the list of decisions that, if retracted, would allow the entry.
        This list is the list of interpretations in the $\leq_p$-minimal structure $S$ less precise than $\Sdec$ that, together with \Sobs, is inconsistent with $\Tenv \bigcup \Tconf \bigcup O$, where $O$ is the observation to be added.
        It can be obtained using unsat\_core methods~\citep{DBLP:conf/sat/CimattiGS07}.
        By repeatedly making the state more precise while keeping it consistent, \User eventually obtains a total solution.

    \item
        \User can make \Sobs or \Sdec less precise, i.e., retract past observations and decisions.
        The resulting state is necessarily consistent.
        This allows \User to explore the complete space of solutions in their environment.
\end{compactitem}

\ignore{By contrast, Figure~\ref{fig:state}(b) shows the interactive process in prior work.}

\begin{example}
    Returning to Example 1, assume that \User first decides that $\RegistrationType{=} \Social$.
    The corresponding state $(\{\}, \{\RegistrationType {=} \Social\})$ is consistent.
    They proceed by observing $\lnot \SocialHabitat$.
    This cannot be added to the state because \SocialHabitat is a prerequisite for a \Social registration, per \Tconf.
    \User thus needs to retract their only decision ($\RegistrationType{=} \Social$) before entering their observation.

    Suppose instead that \User selects $\RegistrationType {=} \Social$ followed by $\LowRent {=}\Fa$.
    The resulting state is consistent with \Tconf in the absence of \Tenv.
    However, the state is inconsistent with \Tenv because a \Social registration can only be done for a social housing, which is necessarily leased at a low rent.
    \User must backtrack on their decision $\RegistrationType {=}\Social$ before entering the observation that the housing is not leased at a low rent.
\end{example}

Note that the presence of \Tenv reduces the number of consistent states, and thus shortens the search for a solution.

\section{Propagation}
\label{Correctness}

\System further assists \User by informing them of the logical consequences of what is known in the current state.
Such knowledge is of great value to \User, and is used in most interactive assistant tools  \citep{van2017kb,falkner2019constraint}.
It reduces the number of decisions that \User has to make, and shortens the interaction in a way analogous to how unit propagation accelerates the naive backtracking algorithm in propositional satisfiability checking.
The consequences of what is known are determined by a computation called \emph{propagation} (or \emph{backbone} computation).

\begin{definition}
\label{def:prop}
    A partial \structure $\J$ is a \emph{\T-propagation} of a partial model \I  of \T iff it is an expansion of \I with the same set of model expansions for \T as \I, i.e.,
    \begin{compactitem}
        \item $\I \leq_p \J$
        \item  $\forall \M \in \compl{\I}: (\eval{\T}{\M} = \Tr \implies \M \in \compl{\J})$
    \end{compactitem}
\end{definition}

Structure $\J$ expands $\I$ by selecting some of the uninterpreted symbols whose value is the same in all models of $T$ that expand $\I$, and assigning them that value.
We call $\J$ the  \emph{optimal propagation} of $\I$ with respect to $T$ if it is the most precise \T-propagation.
Algorithms to compute propagation are discussed in the literature (e.g., \cite{DBLP:conf/icfem/ZhangPZ19}).
\ignore{
    \bart{Also: it might be worht mentioning that "the optimal propagation" corresponds to "brave reasoning"}
}

In our interactive assistant framework, a complication arises when propagation derives facts for environmental symbols using the theory of solutions.
In general, there is no guarantee that these facts are true in the environment faced by the user.
To address this issue, we first propagate using \Tenv only, and then using both \Tenv and \Tconf.
This will further demonstrate the benefits of the split theory setting.

Because \Tenv only concerns environmental symbols, the propagation of \Sobs by \Tenv can assign new values only to environmental symbols.
We denote the structure containing these new values by $\Sobs^e$.
The propagation of the state by both \Tenv and \Tconf, i.e., by \Tall, can assign new values to both environmental and decision symbols.
We denote the structures containing these new values by $\Sobs^a$ and $\Sdec^a$.
The structures $\Sobs^e$, $\Sobs^a$ and $\Sdec^a$ are disjoint.
\System displays these structures differently:

\begin{compactitem}
    \item $\Sobs^e$: The new environmental facts obtained by propagation of \Sobs using \Tenv are shown as consequences to \User: they do not need to be verified by observing the environment. This is safe because the propagated facts are true in the environment of \User, by the definition of \Tenv.
    \item $\Sobs^a$: The new environmental facts obtained by propagation of the state using \Tall are shown as prerequisites  of his decisions, to be verified by observing the environment.
        Indeed, there is no guarantee that they are true in the environment faced by \User.
        The observation of the environment may occur at a later stage of the interactive process.
        If the observation does match the value obtained by propagation, \User adds it to the state $\Sobs$; the state will remain consistent.
        If the observation does not match, adding it would make the state inconsistent: \User must first retract a decision to allow the entry, as explained in Section~\ref{process}.
    \item $\Sdec^a$: The new decision facts obtained by propagation of the state using \Tall are shown as consequences to \User. \User cannot just change such a value, since that would cause inconsistency. If \User wants to update the propagated value, they first  need to retract  decisions that led to the propagation.
\end{compactitem}

\begin{example}
    Returning to Ex. 1, the optimal propagation of the state $(\{\LowRent{=}\Fa, $ $\SocialHabitat{=}\Fa\}, $ $\{\})$ is $\Senv^e=\{\}, \Senv^a=\{\}$ and $\Sdec^a= \{\RegistrationType{=}\Other, $ $\TaxRate{=}10\}$: no observation needs to be verified.
    On the other hand, the optimal propagation of $(\{\}, \{\TaxRate=7\})$ is $\Senv^e=\{\}, \Senv^a=\{\LowRent{=}\Tr\}$ and $\Sdec^a= \{\RegistrationType{=}\Modest\}$
    where $\{\LowRent{=}\Tr\}$ needs to be verified.
\end{example}

\section{Partial solutions}

\label{Stop}
We now consider other forms of acceptable solutions.
For example, it may not be necessary to observe the environment fully.
In addition, some decisions may be unnecessary, in the sense that they can be made arbitrarily without impacting the result.
Finally, the user may postpone some decisions when they have the guarantee that a solution exists.

We define definite and contingent solutions.  A \emph{definite} solution entails the satisfaction of \Tconf, assuming \Tenv:

\begin{definition}
    \label{def:DefiniteSolution}
    A \emph{definite solution} of the split model expansion problem $\mathcal{SMX}$ is a state $(\Sobs, \Sdec)$ such that
    (i) all facts in \Sobs are true in \Senv and (ii) \Tconf is satisfied by every expansion \Se of \Sobs that satisfies \Tenv and by every expansion \Sc of \Sdec,
    \[
        \forall\Se \in \compl{\Sobs} : \eval{\Tenv}{\Se} = \Tr
            \implies \forall\Sc \in \compl{\Sdec}: \eval{\Tconf}{\join{\Sc}{\Se}} = \Tr
    \]
\end{definition}

Notice that Definition \ref{def:DefiniteSolution} treats \Tenv and \Tconf very differently.
This is because even though we do not know the full structure \Senv, we do know that it exists, that it is more precise than \Sobs, and that it satisfies \Tenv. For this reason, we only need to consider expansions of \Sobs that satisfy \Tenv.
For such expansions, it should be the case that no matter how we make our choices, we obtain a solution to the problem at hand.

\begin{example}
    A definite solution for the simplified real estate problem without any prior observation is $(\{\}, $ $ \{\RegistrationType{=}\Other, \TaxRate{=}10\})$.
    Indeed, any sale can be registered with type $\Other$ and a tax rate of 10.
\end{example}

Any consistent state can be expanded into a definite solution.
Indeed, by definition, every consistent state can be expanded into a total solution, which is a definite solution.
In practice, \cautious solutions are often partial, not total, structures.
As soon as \User has found a \cautious solution, they can stop making observations and decisions.
Indeed, such a solution can always be expanded to construct total solutions of the problem, by further observing the environmental and making any decision for the remaining decision symbols.

\ignore{
    \begin{proposition}
        \label{stop-criteria}
        If state $(\Sobs, \Sdec)$ is a \cautious solution for the split model expansion problem $\mathcal{SMX}$, then, every total expansion of \Sdec is a total solution for $\mathcal{SMX}$ i.e.,
        \[\forall\Sc \in \compl{\Sdec} :  \eval{\Tconf}{\join{\Senv} {\Sc}} = \Tr \]
    \end{proposition}

    \begin{proof}
        The implication in the definition of a \cautious solution holds for any expansion of \Sobs, and in particular for \Senv.  The result follows from the fact that \Senv satisfies $\Tenv$.
    \end{proof}
}

In some cases, \User only needs the guarantee that a solution is reachable in their environment, by making the appropriate remaining decisions in a later stage.
Since they do not know their environment fully, establishing this guarantee requires considering every environment more precise than what is known:

\begin{definition}
    A \emph{\brave solution} of the split model expansion problem $\mathcal{SMX}$ is a state $(\Sobs, \Sdec)$ such that
    (i) all facts in \Sobs are true in \Senv and (ii) $\Tconf$ is satisfied by at least one expansion \Sc of \Sdec for every expansion \Se of \Sobs that satisfies $\Tenv$,  i.e.,
    \[ \forall\Se\in \compl{\Sobs}: (\eval{\Tenv}{\Se} = \Tr
        \implies \exists\Sc \in \compl{\Sdec} : \eval{\Tconf}{\join{\Sc}{\Se}} = \Tr)
    \]
\end{definition}

\begin{example}
    A contingent solution for the simplified real estate problem without any prior observation is $(\{\}, $ $ \{\RegistrationType = \Other\})$.
    Indeed, any sale can be registered with type $\Other$.  The tax rate can be computed later (in this case easily).
\end{example}

\ignore{
    \begin{proposition}
        \label{stop-criteria2}
        If state $(\Sobs, \Sdec)$ is a \brave solution for the model expansion problem $\mathcal{SMX}$, then, there exists a total solution for $\mathcal{SMX}$ that expands $\Sdec$, i.e.,
        \[\exists \Sc \in \compl{\Sdec} :  \eval{\Tconf}{\join{\Senv} {\Sc}} = \Tr\]
    \end{proposition}%
}

\paragraph{Complexity}  The complexity of deciding whether a state is a \brave or \cautious solution  is much higher than checking whether it is consistent and total.
In the context of propositional logic,  checking  for \cautious solution is $\Pi^p_1$-complete (coNP-complete)  while checking for \brave solution is $\Pi^p_2$-complete. In Section ``Implementation'', we propose a cheap approximate method to solve this problem.

\section{Relevance}
\label{Relevance}

We now turn to the question of determining which observations are relevant for the search.
We have shown that uninterpreted symbols in a \cautious solution $(\Sobs,\Sdec)$ are \emph{irrelevant} (or ``do-not-care'') in the following sense: $\Sdec$ extended with \emph{any} interpretation for these symbols will satisfy $\Tconf$ in \emph{any} possible environment that expands $\Sobs$ while satisfying \Tenv.
In this section, we extend the notion of relevance to any state.

\newpage
\begin{definition}
    A \emph{$\leq_p$-minimal definite solution} is a definite solution that is not more precise than another definite solution.
    We denote the set of minimal definite solutions more precise than a state by $\MinDef(\Sobs,\Sdec)$.
\end{definition}
A minimal contingent solution is similarly defined; the set is denoted by $\MinCont(\Sobs,\Sdec)$.

\begin{definition}
    \label{relevant}
    The symbol $\sigma$ in \Vall is \emph{relevant} in the consistent state $(\Sobs, \Sdec)$ if, and only if, it is interpreted in at least one $\leq_p$-minimal definite solution more precise than the state, i.e.,
    \begin{equation*}
        \exists \I \in \MinDef(\Sobs, \Sdec): \sigma \in \Sigma_{\I}
    \end{equation*}
\end{definition}
By our definition, all symbols interpreted in the state are relevant.
As soon as \User enters a value for a relevant symbol, the state is updated and the list of relevant symbols can be re-computed.

\begin{proposition}
    If there are no uninterpreted relevant symbols in a consistent state, the state is a definite solution.
\end{proposition}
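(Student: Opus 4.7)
The plan is to contrapositively argue that the only way a relevant symbol can be hidden is by the state itself already being a definite solution. Concretely, I would take any minimal definite solution above the state and show it must equal the state.

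First, I would establish that $\MinDef(\Sobs, \Sdec)$ is non-empty. Since $(\Sobs, \Sdec)$ is consistent, Definition~\ref{consistent} gives a total expansion $S \in \compl{\join{\Sobs}{\Sdec}}$ that satisfies $\Tenv \cup \Tconf$. Splitting $S$ into its environmental and decision parts yields a state that is total and satisfies both theories, hence trivially a definite solution in the sense of Definition~\ref{def:DefiniteSolution} (the quantifiers over expansions collapse). So at least one definite solution above $(\Sobs, \Sdec)$ exists; by finiteness of the vocabulary (we are in propositional logic), the $\leq_p$-minimal elements of the set of such solutions exist, so $\MinDef(\Sobs, \Sdec) \neq \emptyset$.

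Next, I would pick any $\I \in \MinDef(\Sobs, \Sdec)$ and show $\I = (\Sobs, \Sdec)$. By Definition~\ref{relevant}, every symbol interpreted in $\I$ is relevant in $(\Sobs, \Sdec)$. The hypothesis of the proposition says there are no \emph{uninterpreted} relevant symbols, i.e., every relevant symbol is already interpreted in the state. Hence $\Sigma_\I \subseteq \Sigma_{\Sobs} \cup \Sigma_{\Sdec}$. Combined with $(\Sobs, \Sdec) \leq_p \I$, which gives both the reverse inclusion on interpreted symbols and the agreement on values, we conclude $\I$ and $(\Sobs, \Sdec)$ interpret exactly the same symbols with the same values, and are therefore identical as partial structures. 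Since $\I$ is a definite solution, so is $(\Sobs, \Sdec)$.

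The only step requiring care is the existence of a minimal definite solution above the state. For the propositional, finite-vocabulary setting of this paper this is immediate (any descending chain under $\leq_p$ has bounded length), but one should state it explicitly since the definition of relevance is phrased in terms of $\MinDef$ rather than arbitrary definite solutions above the state; without existence of minimal elements, the contrapositive premise ``no uninterpreted relevant symbols'' could be vacuously satisfied even by states that are not themselves definite solutions. Everything else is a direct unfolding of Definitions~\ref{def:DefiniteSolution} and~\ref{relevant}.
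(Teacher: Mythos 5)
Your proof is correct and follows essentially the same route as the paper's: use consistency to obtain a total (hence definite) solution above the state, invoke finiteness/well-foundedness of $\leq_p$ to get a $\leq_p$-minimal definite solution, and then argue that every symbol it interprets is relevant and therefore already interpreted in the state, forcing equality. Your explicit remark about why the existence of minimal elements matters (to rule out a vacuously true hypothesis) is a nice clarification, but it does not change the argument, which matches the paper's.
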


\begin{proof}
     Suppose $(\Sobs, \Sdec)$ is a consistent state without uninterpreted relevant symbols.
     By the definition of consistency, there is a total solution (and hence, a definite) solution that is more precise than $(\Sobs, \Sdec)$, meaning that the set of definite solutions that expand $(\Sobs, \Sdec)$ is non-empty.
     As a consequence (using the fact that $\leqp$ is a well-founded order on the set of states of a fixed and finite vocabulary), this set has at least one $\leqp$-minimal element.
     Let $(\Sobs', \Sdec')$ be such a $\leqp$-minimal definite solution that expands $(\Sobs, \Sdec)$. By Def.~\ref{relevant}, all symbols on which  $(\Sobs', \Sdec')$ and $(\Sobs, \Sdec)$ differ are relevant. Since, by our assumption, all relevant symbols are interpreted in $(\Sobs, \Sdec)$ (and thus also in $(\Sobs', \Sdec')$), the state and the definite solution are actually equal.
\end{proof}

\begin{proposition}
    If there are no uninterpreted relevant environmental symbols in a consistent state, the state is a \brave solution.
\end{proposition}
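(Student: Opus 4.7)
The plan is to reduce the contingent-solution claim to the preceding proposition by showing that the state coincides with some $\leqp$-minimal definite solution on the environmental side. Since every definite solution is in particular a contingent solution, any remaining slack on the decision side can then be absorbed by the existential quantifier in the definition of a contingent solution.

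First I would invoke the argument used in the previous proof: consistency of $(\Sobs, \Sdec)$ together with well-foundedness of $\leqp$ on the fixed finite vocabulary yields that $\MinDef(\Sobs, \Sdec)$ is non-empty, and I pick any $(\Sobsp, \Sdecp)$ from it. Next I would argue that $\Sobsp = \Sobs$: by Def.~\ref{relevant}, every symbol interpreted in $\Sobsp$ is relevant in $(\Sobs, \Sdec)$, with $(\Sobsp, \Sdecp)$ itself as witness; such a symbol is environmental, since $\Sobsp$ is a structure over $\Venv$; and by hypothesis every relevant environmental symbol is already interpreted in $\Sobs$. Hence $\intsym{\Sobsp} \subseteq \intsym{\Sobs}$, which combined with $\Sobs \leqp \Sobsp$ forces equality.

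To close the argument I would take any $\Se \in \compl{\Sobs}$ with $\eval{\Tenv}{\Se} = \Tr$. Since $\Sobs = \Sobsp$, $\Se$ is also an expansion of $\Sobsp$, so the definite-solution property of $(\Sobsp, \Sdecp)$ gives $\eval{\Tconf}{\join{\Sc}{\Se}} = \Tr$ for \emph{every} $\Sc \in \compl{\Sdecp}$. Any such $\Sc$ (at least one exists, by arbitrary extension over a finite $\Vconf$) is also an expansion of $\Sdec$ because $\Sdec \leqp \Sdecp$, which supplies the witness required by the contingent-solution condition. Condition (i) on $\Sobs$ is inherited directly from $(\Sobsp, \Sdecp)$ being a definite solution, via the equality $\Sobs = \Sobsp$.

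The main obstacle is the step $\Sobs = \Sobsp$: one has to see that once all relevant environmental symbols are already in $\Sobs$, a $\leqp$-minimal definite solution cannot interpret any further environmental symbols without immediately witnessing a new relevant symbol and so contradicting the hypothesis. All other steps are routine unfoldings of Def.~\ref{def:DefiniteSolution} and the contingent-solution definition.
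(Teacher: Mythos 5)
Your proof is correct, but it takes a genuinely different route from the paper's. The paper picks a $\leqp$-minimal \emph{contingent} solution $(\Sobsp,\Sdecp)$ expanding the state, asserts that all symbols on which it differs from the state are relevant, concludes $\Sobs=\Sobsp$, and then notes that weakening the decision part of a contingent solution preserves contingency. You instead pick a $\leqp$-minimal \emph{definite} solution, which is exactly the kind of object Definition~\ref{relevant} quantifies over; this makes the relevance of every symbol interpreted in $\Sobsp$ immediate, whereas the paper's appeal to ``the definition'' for the symbols of a minimal \emph{contingent} solution is not literally licensed by Definition~\ref{relevant}, since relevance is defined only via minimal definite solutions and a minimal contingent solution need not coincide with one. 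Your route thus sidesteps that wrinkle at no extra cost: once $\Sobs=\Sobsp$, the universally quantified decision expansions of the definite solution supply the existential witness required for a contingent solution, using $\emptyset\neq\compl{\Sdecp}\subseteq\compl{\Sdec}$, and condition (i) transfers through the equality. What the paper's version buys in exchange is symmetry with the statement being proved (it stays entirely within the world of contingent solutions and $\MinCont$), at the price of implicitly assuming a contingent-solution analogue of relevance. The only remaining caveats --- that $\MinDef(\Sobs,\Sdec)$ is non-empty for a consistent state (consistency alone does not yield condition (i) of Definition~\ref{def:DefiniteSolution} without assuming the recorded observations are true in $\Senv$) and that $\leqp$ is well-founded --- are shared with, and taken for granted by, the paper's own proofs.
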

\begin{proof}
    Similarly to the previous proof, suppose $(\Sobs, \Sdec)$ is a consistent state without uninterpreted relevant environmental symbols.
    Let $(\Sobs', \Sdec')$ be a $\leqp$-minimal contingent solution that expands $(\Sobs, \Sdec)$.
    By the definition, all symbols on which $(\Sobs', \Sdec')$ and $(\Sobs, \Sdec)$ differ are relevant. Since, by our assumption, all relevant environmental symbols are interpreted in $(\Sobs, \Sdec)$ (and thus also in $(\Sobs', \Sdec')$), $\Sobs$ and $\Sobs'$ are equal. Since the state differs from the contingent solution only by the decisions and is consistent, it is a contingent solution too.
\end{proof}

These properties allow \System to alert \User when a definite (resp. contingent) solution has been found.
In propositional logic, the problem of computing whether a symbol $\sigma$ is relevant in some state is $\Sigma^p_3$-complete.
While computing the set of relevant symbols is computationally  expensive, it can be approximated cheaply, as discussed in the next section.

\begin{example}
    When $\{\LowRent{=}\Fa\}$, \LicensedSeller is irrelevant.  Indeed,
    \RegistrationType cannot be \Social nor $\Modest$, and the only $\leq_p$-minimal definite solution is
     $\{\LowRent{=}\Fa, $ $\TaxRate {=} 10, \RegistrationType {=} \Other\}$, in which \LicensedSeller is uninterpreted.
\end{example}

\section {Implementation and  validation}
\label{Implementation}

We developed a decision support tool that embodies our proposal, called the Interactive Consultant.
Given two vocabularies $(\Venv, \Vconf)$ and two theories (\Tenv and \Tconf), the tool generates a user interface that allows the user to follow the proposed interactive process, with the benefit of computer-generated guidance.
The tool supports theories in \fodot, i.e., in first-order logic extended with types, aggregates, linear arithmetic over infinite domains, and (possibly inductive) definitions.\footnote{\url{https://fo-dot.readthedocs.io/}}
It uses the IDP-Z3 reasoning engine for \fodot~\citep{DBLP:journals/corr/abs-2202-00343}\footnote{\url{www.idp-z3.be}}, which is based on the Z3 SMT solver~\citep{de2008z3}.
The tool and its source code are available online.\footnote{\url{https://interactive-consultant.idp-z3.be/} and \url{https://gitlab.com/krr/IDP-Z3}}

The user enters an observation or a decision by clicking on buttons or entering values in fields.
The interactive advisor 1) updates the state, 2) propagates the state using $\Tenv$, 2) propagates again using $\Tenv \land \Tconf$, and 3) determines the relevant symbols.
The user interface is then updated to reflect the attributes and state of symbols: environmental vs. decision symbol, relevant, propagated.
When there are no \emph{environmental} symbols that are both relevant and uninterpreted, the solution is known to be a \brave solution for the environment.
When there is no symbol that is both relevant and uninterpreted, the solution is known to be a \cautious solution for the environment, and the user can stop observing and making decisions.

Because relevance, as defined in Section~\ref{Relevance}, is expensive to compute, we approximate it using a 2-step process:
1) we compute propagation in state $(\Sobs, \Sdec)$ as discussed in Section~\ref{Correctness} and create theory $C$ consisting of the propagated literals;\footnote{I.e., if the $\Tall$-propagation contains $\{p=\Tr, q=\Fa\}$,  theory $C$ contains $\{p. \lnot q.\}$.}
2) we replace atoms occurring in the grounded\footnote{The grounded version is obtained by expanding quantifications and aggregates over their (finite) domain.} version of $\Tall$ by their value in the state (including propagated values), if any, and we simplify the formula using the laws of logic and arithmetic, to obtain $\Tall^*$.
Then, the set of definite solutions that extend state $(\Sobs, \Sdec)$ for theory $C \cup \Tall^*$ is the set of such solutions for theory $\Tall$, i.e., $C \cup \Tall^*$ and $\Tall$ are equivalent in that state.
The symbols that do not occur in theory $C \cup \Tall^*$ cannot be relevant: indeed, changing their value cannot change the truth-value of $\Tall$ in the state.
Thus, theory $C \cup \Tall^*$ contains all the symbols that are relevant in the state; it may also contain some irrelevant symbols, depending on how thoroughly the propagation and simplification is performed, but this is acceptable in practice.
Since Step 1 is computed for the reasons given in Section~\ref{Correctness}, the additional cost of computing relevance is only determined by Step 2, and can be very low and still be useful to the user.

%\bart{Not for this version, but for you phd. THIS is the kind of thing you can easily add some validation about. ``very low''? How does it compare to full relevance computation?}
% thanks

Definitions stated in the theory require special care when determining relevance.
\emph{Definitions} specify a unique interpretation of a defined symbol, given the interpretation of its parameters.
They take the form of an equivalence between the defined atom and a \emph{definiens}, i.e., a logic formula that defines it.
A defined atom can be made irrelevant by replacing it by its definiens in the theory: the transformed theory will be equivalent (except for the defined symbol), and the defined atom will not occur in it anymore.
Yet, defined atoms are often relevant for the user as it abstracts complexity in a useful way.
Thus, we distinguish definitions from the other logical sentences in the theory.
We consider a ground atom relevant if, and only if, it is important for the user to know its value, or it occurs in a simplified logical sentence, or it occurs in the simplified definition of a ground atom previously considered relevant.

\ignore{
    \bart{It would be good to have a formal result of correctness here. But it might not be easy since it is unclear how the "simplification" works}
    \pierre{Exactly.  Maybe the topic of another paper ?}
}

To validate our approach, we have configured the decision support tool for Example~\ref{ex:wet} (available online\footnote{\url{https://tinyurl.com/simpleRegistration}}).
Our tests showed that the system effectively assists the user by correctly determining which hypotheses must be verified, and which observations are relevant.

We then performed experiments on the real estate legislation adapted from~\cite{deryck2019legislation} (available online\footnote{\url{https://tinyurl.com/BelgianRegistration}}).
We simulate the user by a robot entering values in the user interface in a random order.
We measured how many decisions and observations the robot has to make, using two approaches:
\begin{compactitem}
    \item using the traditional approach where the problem instance is fully known before model expansion; the user enters the observations before his decisions, and stops when he finds a total solution;
    \item using our approach; the user only enters relevant observations and decisions, until he finds a definite solution.
\end{compactitem}
In the traditional approach, the user has to make an average of 26.4 entries to obtain a model;\footnote{Depending on the order of data entry, the value of some environmental symbols may be derived by propagation of \Tenv: in that case, the user does not have to enter a value for them.}
with the approach we propose, the user only needs to make 11.6 entries to obtain a definite solution, a 56\% reduction (see details in Appendix).
This reduction in the user's workload is also observed in other realistic use cases~\citep{aerts2022knowledge}.

\section{Related work}
\label{related}

The configuration problem, i.e., the problem of configuring a complex system out of simpler components to satisfy customer requirements, is a special case of model expansion that has been studied, e.g., by~\citet{felfernig2014knowledge}.
Several authors have discussed interactive tools to solve configuration problems~\citep{Hotz2014b, madsen2003methods, van2017kb,falkner2019constraint}.
Our work is orthogonal to these studies.
Indeed, the explicit distinction we make between environmental and decision symbols, and between the theory of the environment and the theory of acceptable solutions, can be combined with the ideas from earlier work.
To illustrate this, our interactive tool supports the 8 reasoning tasks identified as useful by \citet{van2017kb}.

Several approaches for controlling the relevance of symbols in configuration problems have been proposed.
In these approaches, the modeler has the responsibility to write so-called \emph{activation rules} that explicitly specify when a symbol becomes active (as done by \cite{mittal1990dynamic,DBLP:conf/aaai/BowenB91} in Dynamic CSPs), or even when an entire component becomes active (as done by \cite{stumptner1993generative} in Generative CSPs).
Our approach does not need activation rules to derive relevance of symbols.
\citet{jansen2016relevance} also propose an approach without activation rules, based on justification theory.
Its implementation requires a custom solver~\citep{deryck2019legislation}.
By contrast, our implementation uses a standard SMT solver.

To facilitate decision-making in companies, the Object Management Group has published a widely accepted standard in the business process communities, called the Decision Model and Notation standard (DMN)~\citep{DMN}.
One extension of DMN, called cDMN~\citep{aerts2020tackling}, further increases the expressivity of the notation.
Decision models in those two notations can be translated automatically into \fodot~\citep{aerts2020tackling}, allowing our Interactive Consultant to support decision-making in companies.
%Other DMN decision support tools, such as Camunda\footnote{\url{https://camunda.com/}}, support a ``wildcard'' execution mode: the user may enter a wildcard character in entry fields to represent unknown values, and the system computes a table listing the decisions for each possible value of the unknown symbols.
These DMN representations would be enhanced by using two vocabularies and two theories, as we propose, further highlighting the relevance of our work.

Our two-theory setting has some similarity with distributed constraint satisfaction problems, i.e., problems where several agents, each with their own constraints, search for a solution that satisfies their constraints~\citep{yokoo2012distributed}.
Here, the environment, \System and \User may be viewed as agents, each with their own set of constraints.
In our case, however, the theory of the environment is used by another agent, the interactive assistant.

The concept of relevance has been discussed in a special issue of \textit{Artificial Intelligence} in 1997~\citep{subramanian1997relevance}, focusing on autonomous systems with limited computational resources.
Relevance has been applied to improve the performance of propositional satisfiability solvers~\citep{jansen2016relevance}.
Instead, we apply it to reduce the workload of the user, by showing them the observations and decisions that make them progress in their search of a solution.

\section{Conclusion}

In this paper, we have studied the problem of model expansion in an unknown, but observable, environment in a precise and formal way.
It was crucial to make an \emph{explicit} distinction between environmental and decision symbols, as well as  between the laws of possible environments and the laws of acceptable solutions.
Additionally, we have shown how to reduce, safely, the number of observations and decisions to be made by the user of an interactive assistant in such a setting.

Relevance is an interesting topic for future work.
The partial solutions we describe entail the satisfaction of the whole theory of acceptable solutions, given the environment.
Sometimes, the user is interested in ensuring that a more limited goal is entailed by their decisions.  By determining relevance relative to a goal, the number of observations can be further reduced.  Furthermore, one could investigate whether some observations and decisions are more relevant than others.

Our approach could be generalized to problems with multiple agents participating in a model expansion, each with their own goals and constraints, e.g., in Community-based Configuration problems \citep{Felfernig2014b} and Distributed Constraint Satisfaction~\citep{yokoo2012distributed}.

\section{Competing interests}

The authors declare none.

\bibliographystyle{tlplike}
\bibliography{references}

\appendix

\section{Detailed Validation Report}

    The theory to represent the real estate tax legislation has 27 environmental symbols and two decision symbols.
    We created 5 problem instances by randomly choosing \Venv structures that satisfy \Tenv.

    To simulate the traditional approach, the robot enters the environmental data in a random order, and at each step, propagation by \Tenv is performed.
    If the value of an environmental symbol is derived by this \Tenv-propagation, the robot does not have to enter it.
    After all environmental data is entered, the robot chooses decision values randomly among those that remain possible, until he obtains a total solution.

    To simulate the approach we propose, at each step, the robot fills in one uninterpreted symbol, chosen randomly among the relevant ones, until he obtains a definite solution.  If the symbol is environmental, it enters the value given by the problem instance; if it is a decision symbol, it chooses a valid value at random.  When the state becomes inconsistent, \User retracts a conflicting decision at random.

    Table A 1 shows that the workload is significantly reduced when the system determines which observations are relevant, and which observations must be made in the environment to ensure the solution is definite, as we propose.

    \begin{table}[ht!]
        \label{tab:counts}
     \centering
     \caption{The number of observations and decisions is greatly reduced with our proposal.}
     {\tablefont\begin{tabular}{@{\extracolsep{\fill}}lcc}
       \topline
        ~& \multicolumn{2}{c}{Count of Observations and Decisions} \\
                ~                             & Traditional                          & Our proposal    \\ \hline
                    Sale 1  & 26                              & 10         \\
                    Sale 2  & 26                              & 14                                                                \\
                    Sale 3  & 26                              & 10                                                                \\
                    Sale 4  & 27                              & 10                                                                \\
                    Sale 5  & 27                              & 14                                                                 \\ \hline
                    Average & 26.4                            & 11.6                                                              \\ \hline
                    Gain    & ~           & -14.8   (- 56 \%)
       \botline
        \end{tabular}}
    \end{table}

\end{document}